 \theoremstyle{definition}
\newtheorem{definition}{Definition}[section]
\newtheorem{theorem}{Theorem}[section]
\newtheorem{corollary}{Corollary}[theorem]
\newtheorem{lemma}[theorem]{Lemma}
\newif\ifverbose
\newif\ifcomments
\definecolor{darkblue}{rgb}{0.0, 0.0, 0.8}
\definecolor{darkred}{rgb}{0.8, 0.0, 0.0}
\definecolor{darkgreen}{rgb}{0.0, 0.8, 0.0}
\icmltitlerunning{A Note on Over-Smoothing for Graph Neural Networks}
\begin{document}

\twocolumn[
\icmltitle{A Note on Over-Smoothing for Graph Neural Networks}



\icmlsetsymbol{equal}{*}

\begin{icmlauthorlist}
\icmlauthor{Chen Cai}{osu}
\icmlauthor{Yusu Wang}{osu}
\end{icmlauthorlist}

\icmlaffiliation{osu}{Department of Computer Science, Ohio State University, Ohio, USA}

\icmlcorrespondingauthor{Chen Cai}{cai.507@osu.edu}

\icmlkeywords{Machine Learning, ICML}

\vskip 0.3in
]



\printAffiliationsAndNotice{}  

\begin{abstract}
Graph Neural Networks (GNNs) have achieved a lot of success on graph-structured data. However, it is observed that the performance of graph neural networks does not improve as the number of layers increases. This effect, known as over-smoothing \footnote{Strictly speaking, over-smoothing is a misnomer. As we will show, what is decreasing is $tr(X^T \tilde{\Delta} X)$, not the real smoothness $\frac{tr(X^T \tilde{\Delta} X)}{||X||_2^2}$ of graph signal $X$. }, has been analyzed mostly in linear cases. In this paper, we build upon previous results \cite{oono2019graph} to further analyze the over-smoothing effect in the general graph neural network architecture. 
We show when the weight matrix satisfies the conditions determined by the spectrum of augmented normalized Laplacian, the Dirichlet energy of embeddings will converge to zero, resulting in the loss of discriminative power.
Using Dirichlet energy to measure ``expressiveness" of embedding is conceptually clean; it leads to simpler proofs than \cite{oono2019graph} and can handle more non-linearities. 
\end{abstract}

\section{Introduction}
Graph neural networks (GNNs) are a family of neural networks that can learn from graph-structured data. Starting with the success of Graph Convolutional Network (GCN) \cite{kipf2016semi} in achieving state-of-the-art performance on semi-supervised classification, several variants of GNNs have been developed for this task, including GraphSAGE \cite{hamilton2017inductive}, GAT \cite{velivckovic2017graph}, SGC \cite{wu2019simplifying}, CGCNN \cite{xie2018crystal} and GMNN \cite{qu2019gmnn} to name a few most recent ones. See \cite{wu2020comprehensive, zhou2018graph, gurukar2019network} for survey.

However, a key issue with GNNs is their depth limitations. It has been observed that deeply stacking the
layers often results in significantly worse performance for GNNs, such as GCN and GAT. This drop is associated with many factors, including the
vanishing gradients during back-propagation, overfitting due to the increasing number of parameters, as
well as the phenomenon called over-smoothing. \cite{li2018deeper} was the first to call attention to the
over-smoothing problem. Having shown that the graph convolution is a type of Laplacian smoothing,
they proved that after repeatedly applying Laplacian smoothing many times, the features of the nodes
in the (connected) graph would converge to similar values. Later, several others have alluded to the same problem. \cite{li2019deepgcns, zhao2019pairnorm, luan2019break}

The goal of this paper is to extend some analysis of GNN in the ICLR 2020 spotlight paper \cite{oono2019graph} on the expressive power of GNNs for node classification. 
To the best of our knowledge, \cite{oono2019graph} is the first paper extending the analysis of over-smoothing in linear GNNs to the nonlinear ones. However, only ReLU is handled. It is noted by the authors that extension to other non-linearities such as Sigmoid and Leaky ReLU is far from trivial. 

In this paper, we propose a simple technique to analyze the embedding when the number of layers goes to infinity. The analysis is based on tracking the Dirichlet energy of node embeddings across layers. Our contributions are the following: 
\begin{itemize}
\item Using Dirichlet energy to measure expressiveness of embeddings is conceptually clean. Besides being able to recover the results in \cite{oono2019graph}, our analysis can be easily applied to Leaky ReLU. In the special case of regular graphs, our proof can be extended to the most common nonlinearities. The proof is easy to follow and requires only elementary linear algebra. We discuss key differences between our proof and proofs in \cite{oono2019graph} as well as the benefits of introducing Dirichlet energy in Section \ref{diff}.   

\item Second, we perform extensive experiments on a variety of graphs to study the effect of basic edge operations on the Dirichlet energy. We find in many cases dropping edges and increasing the weights of edges (to a high value) can increase the Dirichlet energy.

\end{itemize}

\section{Notation}
Let $\mathbb{N}_{+}$ be the set of positive integers. We define $A \in \mathbb{R}^{N\times N}$ to be the adjacency matrix and $D$ to be the 
degree matrix of graph $G$. Let $\tilde{A}:= A + I_{N}, \tilde{D} := D + I_{N}
$ be the adjacent and degree matrix of graph $G$ augmented with self-
loops. We define the augmented normalized Laplacian of $G$ by $
\tilde{\Delta}:=I_N - \tilde{D}^{-\frac{1}{2} } \tilde{A}\tilde{D}^{-\frac{1}{2}}$ and 
set $P:=I_N - \tilde{\Delta} = \tilde{D}^{-\frac{1}{2} } \tilde{A}\tilde{D}^{-\frac{1}{2}}$. Let $L, C\in \mathbb{N}_{+}$ be the layer and 
channel sizes. 

We define a GCN associated with $G$ by $\mathbf{f} = \mathbf{f}_L \circ ... \circ \mathbf{f}_1$ where $\mathbf{f}_{l}: 
\mathbb{R}^{N \times C_l} \rightarrow \mathbb{R}^{N \times C_{l+1}}$ is defined 
by $\mathbf{f}_l(X) = \text{MLP}_l(PX)$. 
Here $\text{MLP}_{l}(X):=\sigma\left(\cdots \sigma\left(\sigma(X) W_{l 1}\right) W_{l 2} \cdots W_{l H_{l}}\right)$ where $\sigma$ is an element-wise nonlinear function. Note that weight matrices $W_{l \cdot}$ are not necessarily square. We consider the embeddings $X^{(l+1)} := \mathbf{f}_l(X^{(l)})$ with initial value $X^{(0)}$.
We are interested in the asymptotic behavior of the output $X^{(L)}$ of GCN as $L \rightarrow \infty$.

We state the following lemma without a proof.


\begin{lemma}
Eigenvalues of $\tilde{\Delta} \in [0, 2)$.
Eigenvalues of $P = I_N - \tilde{\Delta} \in (-1, 1]$. 
\end{lemma}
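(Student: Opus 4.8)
The plan is to bound the Rayleigh quotient of $\tilde{\Delta}$, using the fact that $\tilde{\Delta}$ is essentially the combinatorial Laplacian of the graph $G$ augmented with a self-loop at every vertex. Since $\tilde{\Delta}$ is real symmetric it is orthogonally diagonalizable with real eigenvalues, so it suffices to control $x^{\top}\tilde{\Delta}x$ for unit vectors $x$ and then transfer the conclusion to $P = I_N - \tilde{\Delta}$, which has the same eigenvectors and eigenvalues $1-\lambda$.

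First I would prove nonnegativity. Writing $y := \tilde{D}^{-1/2}x$ and using that $\tilde{D}_{ii}=\sum_j \tilde{A}_{ij}$ (adding a self-loop increases both the degree and the diagonal of the adjacency by $1$, so the augmented degree is exactly the row sum of $\tilde{A}$), one obtains the familiar identity
\[
 x^{\top}\tilde{\Delta}x = x^{\top}x - x^{\top}\tilde{D}^{-1/2}\tilde{A}\tilde{D}^{-1/2}x = \tfrac12\sum_{i,j}\tilde{A}_{ij}\,(y_i-y_j)^2 \ge 0,
\]
so every eigenvalue of $\tilde{\Delta}$ is $\ge 0$. Next, the upper bound: the analogous computation gives
\[
 x^{\top}(2I_N-\tilde{\Delta})x = x^{\top}x + x^{\top}\tilde{D}^{-1/2}\tilde{A}\tilde{D}^{-1/2}x = \tfrac12\sum_{i,j}\tilde{A}_{ij}\,(y_i+y_j)^2 \ge 0,
\]
hence every eigenvalue is $\le 2$. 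For strictness, suppose $\lambda=2$ with eigenvector $x\neq 0$; then the last display vanishes, forcing $y_i+y_j=0$ whenever $\tilde{A}_{ij}>0$. Taking $i=j$ and using that every augmented self-loop has $\tilde{A}_{ii}=1>0$ gives $2y_i=0$, so $y=0$ and $x=0$, a contradiction. Therefore $\mathrm{spec}(\tilde{\Delta})\subset[0,2)$, and consequently $\mathrm{spec}(P)=\{\,1-\lambda:\lambda\in\mathrm{spec}(\tilde{\Delta})\,\}\subset(-1,1]$.

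I do not expect a genuine obstacle here; the only points that require care are the bookkeeping that $\tilde{D}_{ii}$ equals the $i$-th row sum of $\tilde{A}$ (so the two quadratic-form identities are exact), and the observation that it is precisely the presence of self-loops that rules out the eigenvalue $2$ — without them the bound would only be $\le 2$, with equality on bipartite components. As an alternative route one could note that $P$ is similar to the row-stochastic matrix $\tilde{D}^{-1}\tilde{A}$ and invoke Gershgorin's theorem (or Perron--Frobenius) for the spectral radius bound, but the quadratic-form argument is cleaner and simultaneously explains why the interval is half-open.
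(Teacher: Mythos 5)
Your proof is correct. Note that the paper states this lemma without any proof at all, so there is no in-paper argument to compare against; your quadratic-form computation supplies the standard missing justification. The two identities
$x^{\top}\tilde{\Delta}x=\tfrac12\sum_{i,j}\tilde{A}_{ij}(y_i-y_j)^2$ and
$x^{\top}(2I_N-\tilde{\Delta})x=\tfrac12\sum_{i,j}\tilde{A}_{ij}(y_i+y_j)^2$ with $y=\tilde{D}^{-1/2}x$ are exact precisely because $\tilde{D}_{ii}$ is the $i$-th row sum of $\tilde{A}$, which you check, and your use of the augmented self-loops ($\tilde{A}_{ii}>0$ forces $y_i=0$ when the second form vanishes) is exactly the point that makes the upper bound strict, i.e.\ rules out eigenvalue $2$ even on bipartite components; this strictness is what the paper's later convergence statements implicitly rely on, since it guarantees $\bar{\lambda}=(1-\lambda)^2<1$ is attainable only through the spectral gap and that $P$ has spectral radius at most $1$ with $-1$ excluded. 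The passage from $\mathrm{spec}(\tilde{\Delta})\subset[0,2)$ to $\mathrm{spec}(P)\subset(-1,1]$ is immediate since $P=I_N-\tilde{\Delta}$ shares eigenvectors with $\tilde{\Delta}$. No gaps.
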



\section{Main Result}
The main idea of the proof is to track the Dirichlet energy of node embeddings w.r.t. the (augmented) normalized Laplacian at different layers. With some assumptions on the weight matrix of GCN, we can prove that Dirichlet energy decreases exponentially with respect to the number of layers. Intuitively, the Dirichlet energy of a function measures the ``smoothness" of a function of unit norm, and eigenvectors of the normalized Laplacian are minimizers of the Dirichlet energy.  

\begin{definition}
Dirichlet energy $E(f)$ of scalar function $f \in \mathbb{R}^{N\times 1}$ on the graph $G$ is defined as
\begin{equation*}
E(f) = f^T\tilde{\Delta} f =  \frac{1}{2}\Sigma w_{ij}(\frac{f_i}{\sqrt{1+d_i}} - \frac{f_j}{\sqrt{1+d_j}})^2.
\end{equation*}
For vector field $X_{N\times c} = [x_1, ..., x_N]^T, x_i \in \mathbb{R}^{1\times c}$, Dirichlet energy is defined as
\begin{equation*}
E(X) = tr(X^T\tilde{\Delta} X) =  \frac{1}{2}\Sigma w_{ij}||\frac{x_i}{\sqrt{1+d_i}} - \frac{x_j}{\sqrt{1+d_j}}||_2^2.
\end{equation*}
\end{definition}

 Without loss of generality, each layer of GCN can be represented as $\mathbf{f_l}(X) = \sigma(\underbrace{\sigma(\cdots \sigma(\sigma(}_{H \text { times }}P X) W_{l 1}) W_{l 2} \cdots) W_{l H_{l}})$ 
 Next we will analyze the effects of $P, W_l, \sigma$ on the Dirichlet energy one by one.

\begin{lemma}
\label{lem1}
$E(PX) \leq  (1-\lambda)^2 E(X)$ where $\lambda$ is the  smallest non-zero eigenvalue of $\tilde{\Delta}.$
\end{lemma}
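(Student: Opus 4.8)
The plan is to diagonalise $\tilde{\Delta}$ and track each spectral mode of $X$ independently. Since $P = I_N - \tilde{\Delta}$ is a polynomial in the symmetric matrix $\tilde{\Delta}$, it is symmetric and commutes with $\tilde{\Delta}$; hence $P^{T}\tilde{\Delta}P = \tilde{\Delta}P^{2} = \tilde{\Delta}(I_N-\tilde{\Delta})^{2}$, and therefore
\begin{equation*}
E(PX) = \mathrm{tr}\!\left(X^{T}P^{T}\tilde{\Delta}PX\right) = \mathrm{tr}\!\left(X^{T}\tilde{\Delta}(I_N-\tilde{\Delta})^{2}X\right).
\end{equation*}
Write $\tilde{\Delta} = U\,\mathrm{diag}(\mu_1,\dots,\mu_N)\,U^{T}$ with $U$ orthogonal and $0 = \mu_1 = \dots = \mu_k < \lambda = \mu_{k+1} \le \dots \le \mu_N < 2$, and set $Y := U^{T}X$ with rows $y_1,\dots,y_N$. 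Substituting and using orthogonality of $U$ gives $E(X) = \sum_{i} \mu_i \|y_i\|_2^{2}$ and $E(PX) = \sum_{i} \mu_i(1-\mu_i)^{2}\|y_i\|_2^{2}$.

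The modes with $\mu_i = 0$ contribute nothing to either sum, so it suffices to compare the two sums term by term over the nonzero eigenvalues, i.e. to show $(1-\mu_i)^{2} \le (1-\lambda)^{2}$ for every $\mu_i \neq 0$; multiplying by $\mu_i \ge 0$ and $\|y_i\|_2^{2} \ge 0$ and summing then yields $E(PX) \le (1-\lambda)^{2}E(X)$. This reduces the lemma to a one-variable statement about the spectrum of $\tilde{\Delta}$, and the trace/change-of-basis bookkeeping above is entirely routine.

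That scalar inequality is where the augmented normalized Laplacian actually enters. By the previous lemma the eigenvalues of $P$ lie in $(-1,1]$, so every nonzero eigenvalue $\mu_i$ of $\tilde{\Delta}$ satisfies $|1-\mu_i| < 1$, while the smallest nonzero one gives $|1-\lambda| = 1-\lambda$; thus $t \mapsto (1-t)^{2}$, restricted to the nonzero part of the spectrum, should be dominated by its value at the spectral gap $\lambda$. I expect the only delicate point to be exactly this: controlling the \emph{whole} nonzero spectrum rather than just the gap — in particular the top eigenvalue $\mu_N$ — so that $(1-\mu)^{2}$ is genuinely maximised at $\mu=\lambda$ and the per-mode inequality holds uniformly. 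Once that is pinned down (using the bound on the spectrum of $P$ guaranteed by the self-loops), the proof is complete.
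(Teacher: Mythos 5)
Your reduction --- diagonalising $\tilde{\Delta}$, writing $E(X)=\sum_i \mu_i\|y_i\|_2^2$ and $E(PX)=\sum_i \mu_i(1-\mu_i)^2\|y_i\|_2^2$, and comparing mode by mode --- is exactly the route the paper takes (it does the scalar case and extends to columns), so up to that point you have reproduced its argument. The genuine gap is the step you defer: the per-mode bound $(1-\mu_i)^2\le(1-\lambda)^2$ for every nonzero $\mu_i$ cannot simply be ``pinned down'' from the spectrum lemma. That lemma only gives $\mu_i\in[0,2)$, i.e.\ $|1-\mu_i|<1$, which dominates $(1-\lambda)^2$ only if every nonzero eigenvalue lies in $[\lambda,\,2-\lambda]$. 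Since $t\mapsto(1-t)^2$ is minimised at $t=1$ and grows toward \emph{both} ends of $[0,2)$, it is not maximised at the spectral gap whenever some eigenvalue exceeds $2-\lambda$, and this really happens: for $K_{3,3}$ with self-loops the spectrum of $\tilde{\Delta}$ is $\{0,\,3/4,\,3/4,\,3/4,\,3/4,\,3/2\}$, so $\lambda=3/4$ and $(1-\lambda)^2=1/16$, while the top mode gives $(1-3/2)^2=1/4$; taking $X$ to be that eigenvector violates the claimed inequality. So the step you flagged as ``delicate'' is not routine bookkeeping --- with $\lambda$ defined as the smallest nonzero eigenvalue it is false in general, and no amount of care closes it.

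The repair is to replace the constant by $\bar{\lambda}:=\max\{(1-\mu)^2 : \mu \text{ a nonzero eigenvalue of } \tilde{\Delta}\}=\bigl(\max_{\mu\neq 0}|1-\mu|\bigr)^2$, which your spectral computation \emph{does} prove bounds $E(PX)\le\bar{\lambda}\,E(X)$, and which is still strictly below $1$ by the bound $\mu_i\in[0,2)$ coming from the self-loops; this is essentially how \cite{oono2019graph} defines their contraction factor (largest magnitude of the non-unit eigenvalues of $P$), and the downstream theorem and corollary survive verbatim with this $\bar{\lambda}$. For fairness: the paper's own proof makes the same silent leap, asserting $\sum_i c_i^2\lambda_i(1-\lambda_i)^2\le(1-\lambda)^2E(f)$ without justification, so your attempt matches the paper while making its hidden assumption explicit --- but as written, leaving that point as an expectation means your proof is incomplete, and completing it requires changing the constant (or adding the assumption that the largest eigenvalue is at most $2-\lambda$), not just more care.
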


\begin{proof}
Let us denote the eigenvalues of $\tilde{\Delta}$ by $ \lambda_1, \lambda_2, ..., \lambda_N$, and the associated eigenvectors of length 1 by $v_1, ...,  v_n$. Suppose $f = \Sigma c_i v_i$ where $c_i \in \mathbb{R}.$

\begin{equation}
E(f) = f^T \tilde{\Delta} f = f^T \Sigma c_i\lambda_i v_i = \Sigma c_i^2 \lambda_i
\end{equation}
Therefore, 
\begin{equation}
\begin{split}
E(Pf) &= f^T (I_N - \tilde{\Delta})^T\tilde{\Delta}(I_N - \tilde{\Delta})f  \\
&= f^T (I_N - \tilde{\Delta})\tilde{\Delta}(I_N - \tilde{\Delta})f \\
&= \Sigma c_i^2 \lambda_i (1-\lambda_i)^2 \\
&\leq (1-\lambda)^2E(f)
\end{split}
\end{equation}


Extending the above argument from the scaler field to vector field finishes the proof for $E(PX) \leq (1-\lambda)^2 E(X)$.
\end{proof}


\begin{lemma}
\label{lem2}
$E(XW) \leq ||W^T||_2^2 E(X)$
\end{lemma}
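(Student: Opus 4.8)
The plan is to reduce the statement to the standard fact that right-multiplication by $W$ cannot inflate the Frobenius norm by more than the operator norm $\|W\|_2$. Since by Lemma~\ref{lem1}'s setup the eigenvalues of $\tilde{\Delta}$ lie in $[0,2)$, the matrix $\tilde{\Delta}$ is positive semidefinite and admits a symmetric square root $\tilde{\Delta}^{1/2}$. Setting $Y := \tilde{\Delta}^{1/2} X$, we have $E(X) = tr(X^T \tilde{\Delta} X) = tr(Y^T Y) = \|Y\|_F^2$, and likewise $E(XW) = tr(W^T X^T \tilde{\Delta} X W) = tr\big((YW)^T (YW)\big) = \|YW\|_F^2$. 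So the claim becomes $\|YW\|_F^2 \le \|W^T\|_2^2\,\|Y\|_F^2$.

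For that inequality I would argue row by row. If $y_i^T$ denotes the $i$-th row of $Y$, then the $i$-th row of $YW$ is $y_i^T W$, and $\|y_i^T W\|_2 = \|W^T y_i\|_2 \le \|W^T\|_2 \|y_i\|_2$. Squaring and summing over $i$ gives $\|YW\|_F^2 = \sum_i \|y_i^T W\|_2^2 \le \|W^T\|_2^2 \sum_i \|y_i\|_2^2 = \|W^T\|_2^2 \|Y\|_F^2$, which is exactly the desired bound. (One could instead expand $E$ channel-by-channel and reuse the eigen-decomposition bookkeeping of Lemma~\ref{lem1}, but going through the Frobenius norm sidesteps the cross terms between channels and is cleaner.)

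There is essentially no serious obstacle here; the only points requiring a bit of care are (i) invoking positive semidefiniteness of $\tilde{\Delta}$ to justify the square-root factorization, and (ii) using the \emph{operator} norm rather than some other matrix norm, together with the identity $\|W\|_2 = \|W^T\|_2$ so that the final bound is stated with $\|W^T\|_2$ — the form in which it will be chained with Lemma~\ref{lem1} and the nonlinearity estimate across layers. It is also worth noting that the bound is tight (take $X$ with a single column aligned, after applying $\tilde\Delta^{1/2}$, with a top right-singular direction of $W$), so nothing beyond semidefiniteness of $\tilde{\Delta}$ is needed.
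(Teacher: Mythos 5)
Your proof is correct, and it takes a route that differs from the paper's main-text proof of Lemma~\ref{lem2}: there the authors expand $E(XW)$ as the edge-wise sum $\sum_{(i,j)\in E} w_{ij}\|\tfrac{x_iW}{\sqrt{1+d_i}}-\tfrac{x_jW}{\sqrt{1+d_j}}\|_2^2$ and bound each summand by $\|W^T\|_2^2$ times the corresponding term of $E(X)$, whereas you factor $\tilde\Delta=\tilde\Delta^{1/2}\tilde\Delta^{1/2}$ and reduce the claim to $\|YW\|_F\le\|W\|_2\|Y\|_F$ with $Y=\tilde\Delta^{1/2}X$. Your argument is in fact closest in spirit to the paper's own appendix alternative (Lemma~\ref{alter_lem2}), which also ignores the graph structure and uses only positive semidefiniteness of $\tilde\Delta$, via trace cyclicity and the inequality $tr(X^T\tilde\Delta X\,WW^T)\le tr(X^T\tilde\Delta X)\,\sigma_{max}(WW^T)$; your square-root/Frobenius version is slightly more elementary since the row-by-row estimate replaces that trace inequality. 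What each approach buys: the edge-wise proof keeps the same bookkeeping used for the nonlinearity bound (Lemma~\ref{lem3}), making the three lemmas read uniformly, while your route (like the appendix one) makes transparent that only $\tilde\Delta\succeq 0$ matters and handles rectangular $W\in\mathbb{R}^{c\times c'}$ with no extra work — the very generality the paper highlights over \cite{oono2019graph}. One tiny quibble: your parenthetical tightness example should align the \emph{rows} of $Y=\tilde\Delta^{1/2}X$ (i.e., take $Y=uv^T$ with $v$ a top right-singular vector of $W$ and $u$ in the range of $\tilde\Delta^{1/2}$), but this side remark does not affect the proof.
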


\begin{proof} By definition, 
\begin{equation*}
E(XW) = \Sigma_{(i, j) \in E}w_{ij}||\frac{1}{\sqrt{1+d_i}}x_i W- \frac{1}{\sqrt{1+d_j}}x_jW||_2^2
\end{equation*}
where $X_{n\times c} = [x_1, ..., x_n]^T, x_i \in \mathbb{R}^{1\times c},  W\in \mathbb{R}^{c \times c'}.$ Since for each term
\begin{equation}
\begin{split}
||\frac{1}{\sqrt{1+d_i}}x_i W- \frac{1}{\sqrt{1+d_j}}x_jW||_2^2 \leq \\
||\frac{1}{\sqrt{1+d_i}}x_i - \frac{1}{\sqrt{1+d_j}}x_j||_2^2 ||W^T||_2^2,
\end{split}
\end{equation}

we get
\begin{equation}
E(XW) \leq E(W) ||W^T||_2^2
\end{equation}
\end{proof}

\emph{Remark}: Since $||A||_2 = \sigma_{max}(A)$ where $\sigma_{max}(A)$ represents the largest singular value of matrix $A$. Our result in Lemma 2 is essentially the same as the Lemma 2\footnote{$\text {For any } X \in \mathbb{R}^{N \times C}, \text { we have } d_{\mathcal{M}}\left(X W_{l h}\right) \leq s_{l h} d_{\mathcal{M}}(X)$ where $s_{lh}$ is the maximum singular value of $W_{lh}$. } of \cite{oono2019graph}.
Note that our proof can handle weight matrix not only of dimension $d\times d$ but also of dimension $d\times d'$ while the paper \cite{oono2019graph} assumes the embedding dimension to be fixed across layers. See detailed discussion at section \ref{diff}.


\emph{Remark}: The proof itself doesn't leverage the structure of graph. In particular, only the fact of Laplacian is p.s.d matrix is needed in the proof. See an alternative proof in the appendix. This also makes sense because $W$ operates on the graph feature space and should be oblivious to the particular graph structure. 

\begin{lemma}
\label{lem3}
$E(\sigma(X)) \leq E(X)$ when $\sigma$ is ReLU or Leaky-ReLU.
\end{lemma}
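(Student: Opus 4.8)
The plan is to reduce the matrix inequality to a single scalar inequality, exploiting that $E$ is a weighted sum over edges and over coordinates. Writing $p_i := 1/\sqrt{1+d_i}$, we have $E(X) = \tfrac12 \sum_{(i,j)\in E} w_{ij}\sum_{k=1}^{c}\bigl(p_i(x_i)_k - p_j(x_j)_k\bigr)^2$, and since $\sigma$ acts entrywise it suffices to show that for all reals $a,b$ and all $p,q>0$,
\begin{equation*}
\bigl(p\,\sigma(a) - q\,\sigma(b)\bigr)^2 \le \bigl(p\,a - q\,b\bigr)^2 .
\end{equation*}
Applying this with $p=p_i$, $q=p_j$, $a=(x_i)_k$, $b=(x_j)_k$ and summing against the nonnegative weights $w_{ij}$ then yields $E(\sigma(X)) \le E(X)$.

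For ReLU I would argue by cases on the signs of $a$ and $b$. If $a,b\ge 0$ the two sides coincide; if $a,b\le 0$ the left side is $0$; and in the mixed case, say $a\ge 0 > b$ (the other symmetric), the left side is $(p a)^2$ while the right side equals $(p a)^2 - 2pq\,a b + (q b)^2$, whose last two terms are both nonnegative. Hence the scalar inequality holds for ReLU.

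For Leaky-ReLU with slope $\alpha\in[0,1]$ the cleanest route is to use the pointwise identity $\sigma = \alpha\cdot\mathrm{id} + (1-\alpha)\cdot\mathrm{ReLU}$, so entrywise $\sigma(X) = \alpha X + (1-\alpha)\,\mathrm{ReLU}(X)$. Since $\tilde\Delta$ is positive semidefinite we can write $E(X) = \|\tilde\Delta^{1/2} X\|_F^2$, so $\sqrt{E(\cdot)}$ is a seminorm; combining its triangle inequality and homogeneity with the ReLU bound $E(\mathrm{ReLU}(X))\le E(X)$ just established gives
\begin{equation*}
\sqrt{E(\sigma(X))} \le \alpha\sqrt{E(X)} + (1-\alpha)\sqrt{E(\mathrm{ReLU}(X))} \le \sqrt{E(X)},
\end{equation*}
and squaring finishes the proof. (Alternatively one can redo the sign analysis directly: the same-sign cases are immediate and the mixed case $a\ge 0 > b$ has $0 \le p a - q\alpha b \le p a - q b$ because $\alpha\le 1$, so the squares are ordered.)

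I expect the only real — and still minor — obstacles to be (i) the mixed-sign case of the scalar inequality, which is the one place the structure of $\sigma$ actually enters, and (ii) noticing that expressing Leaky-ReLU as a convex combination of the identity and ReLU is what lets the ReLU estimate propagate without a fresh case analysis. Everything else is bookkeeping. I would also note that this argument does not extend verbatim to contractive-but-not-$1$-Lipschitz-preserving nonlinearities such as sigmoid or $\tanh$, which is consistent with the paper's claim that those need the regular-graph assumption.
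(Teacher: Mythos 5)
Your proof is correct. The outer reduction is the same as the paper's: write $E$ as a nonnegative-weighted sum of per-edge (and, for you, per-coordinate) squared differences of rescaled entries, so that a single scalar inequality $\bigl(p\,\sigma(a)-q\,\sigma(b)\bigr)^2 \le (p\,a-q\,b)^2$ suffices. Where you differ is in how that scalar step is established. The paper proves it in one line from two abstract properties: $1$-Lipschitzness of $\sigma$ and positive homogeneity $\sigma(cx)=c\sigma(x)$ for $c>0$, via $|c_1\sigma(a)-c_2\sigma(b)| = |\sigma(c_1a)-\sigma(c_2b)| \le |c_1a-c_2b|$; this makes transparent exactly which two properties are needed, hence why sigmoid and tanh fail in general (no homogeneity) and why they are recovered on regular graphs (where $c_1=c_2$ and Lipschitzness alone suffices). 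You instead handle ReLU by an explicit sign-based case analysis and then propagate to Leaky-ReLU either by writing it as the convex combination $\alpha\,\mathrm{id} + (1-\alpha)\,\mathrm{ReLU}$ and using that $\sqrt{E(\cdot)} = \|\tilde{\Delta}^{1/2}\,\cdot\,\|_F$ is a seminorm, or by redoing the mixed-sign case directly; both routes are sound (the convex-combination observation is a nice reusable principle: any entrywise map that is a convex combination of maps each nonexpansive for $\sqrt{E}$ is again nonexpansive). Your argument is more elementary but more case-heavy; the paper's is shorter and isolates the structural reason the lemma holds, which is what drives its remarks about other nonlinearities.
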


\begin{proof}
We first prove it holds for scalar field $f$ and then extend it to vector field $X$. $E(f) = \Sigma_{(i, j)\in E}w_{i,j}(\frac{f_i}{\sqrt{1+d_i}} - \frac{f_j}{\sqrt{1+d_j}} )^2$ where $w_{i,j}\geq 0$.  And
$\forall c_1, c_2 \in \mathbb{R}_{+}, a, b \in \mathbb{R}$
\begin{equation}
\begin{split}
|c_1 a - c_2b| &\geq |\sigma(c_1 a) - \sigma(c_2 b)|  \\
&= |c_1\sigma(a) - c_2\sigma(b)|
\end{split}
\end{equation}

The first inequality holds for all $\sigma$ whose Lipschitz constant is no more than 1, including ReLU, Leaky-ReLU, Tanh, Sigmoid, etc. The second equality holds because for ReLu and Leaky-Relu, $\sigma(cx) = c\sigma(x), \forall c \in \mathbb{R_{+}}, x\in{\mathbb{R}}$.

Therefore, by replacing $c_1, c_2, a, b$ with $\frac{1}{\sqrt{1+d_i}}, \frac{1}{\sqrt{1+d_j}}, f_i, f_j$,   we can see $E(\sigma(f)) \leq E(f)$ holds for ReLU and Leaky-ReLU. Extending the above argument to vector field completes the proof.
%
\end{proof}
\emph{Remark:} For regular graphs, the above conclusion can be extended to more non-linearities such as ReLU, Leaky-ReLU, Tanh, and Sigmoid.

\emph{Remark:} The proof hinges on the simple fact that for ReLU and Leaky-ReLU, $\sigma(c a) = \sigma(c)a$ where $c\in \mathbb{R}_{+}, a\in \mathbb{R}$. For other activation functions, as long as $c_1a = c_2b$ and $c_1 \sigma(a) \neq c_2 \sigma(b)$ (easy to find examples for Sigmoid, Tanh \footnote{Sigmoid: $ \operatorname{ Sigmoid }(x)=\frac{1}{1+e^{-x}}$.
 Tanh: $\operatorname{Tanh}(x) = \frac{e^{2 x}-1}{e^{2 x}+1}$.
}, etc since there are no strong restrictions on $a, b, c_1, c_2$. \footnote{For example, $c_1 = 1, x = 2, c_2 = 2, y = 1$.}), we can not guarantee $E(\sigma(X)) \leq E(X)$.

Combining the above three lemmas, and denote the square of maximum singular value of $W_{lh}^T$ by $s_{lh}$ and set $s_l:= \prod_{h=1}^{H_l} s_{lh}$. Also let $\bar{\lambda} := (1-\lambda)^2$. With those parameters, we arrive at the main theorem.

\begin{theorem}
For any $l \in \mathbb{N}_{+}$, we have $E(\mathbf{f}_l(X)) \leq s_l \bar{\lambda} E(X)$
\end{theorem}
See proof in the appendix \ref{missing_proof}.

\begin{corollary}
Let $s:=\sup _{l \in \mathbb{N}_{+}} s_{l}$. We have $E(X^{(l)}) \leq O((s\bar{\lambda})^l)$. In particular, $E(X^{(l)})$ exponentially converges to 0 when $s\bar{\lambda} < 1.$
\end{corollary}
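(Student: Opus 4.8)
The plan is to derive the corollary from the Theorem by a short telescoping induction on the layer index. Recall that the embeddings satisfy the recursion $X^{(l+1)} = \mathbf{f}_l(X^{(l)})$, so applying the Theorem with the argument $X = X^{(l-1)}$ gives the one-step estimate $E(X^{(l)}) = E(\mathbf{f}_{l-1}(X^{(l-1)})) \leq s_{l-1}\bar{\lambda}\,E(X^{(l-1)})$. (A word of care is needed only with the index convention — whether it is $\mathbf{f}_l$ or $\mathbf{f}_{l-1}$ that produces $X^{(l)}$ — but this cosmetic choice does not affect the final form of the bound.)

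First I would unroll this recursion. Iterating the one-step estimate across the $l$ layers traversed yields
\begin{equation*}
E(X^{(l)}) \leq \left(\prod_{k=1}^{l} s_k\right)\bar{\lambda}^{\,l}\,E(X^{(0)}),
\end{equation*}
which is a routine finite induction: the base case is the trivial inequality $E(X^{(0)}) \leq E(X^{(0)})$, and the inductive step is precisely the Theorem applied to $X^{(k-1)}$.

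Next I would bound each factor $s_k$ by the uniform constant $s = \sup_{l\in\mathbb{N}_+} s_l$, so that $E(X^{(l)}) \leq (s\bar{\lambda})^l\,E(X^{(0)})$. Since $E(X^{(0)})$ is a fixed finite constant — the Dirichlet energy of the input, finite whenever $X^{(0)}$ has finite entries — the right-hand side is $O((s\bar{\lambda})^l)$, which is the first claim. For the second claim, when $s\bar{\lambda} < 1$ the quantity $(s\bar{\lambda})^l$ tends to $0$ geometrically as $l \to \infty$; since $E(\cdot) \geq 0$ by definition (it is a sum of squares), the squeeze forces $E(X^{(l)}) \to 0$, and indeed the decay is exponential in $l$ with rate $s\bar{\lambda}$.

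I do not anticipate a real obstacle here: all the substance lives in the Theorem and its three supporting lemmas, and the corollary is pure bookkeeping. The only points meriting a remark are that the statement is meaningful only when $s < \infty$ (the singular values of the weight matrices are uniformly bounded across layers) and when $E(X^{(0)}) < \infty$; both hold under the standing assumptions. One may also note that $\bar{\lambda} = (1-\lambda)^2 < 1$ automatically for connected $G$, so the hypothesis $s\bar{\lambda} < 1$ is effectively a bound on the magnitude of the learned weights relative to the spectral gap of $\tilde{\Delta}$, matching the informal statement in the abstract.
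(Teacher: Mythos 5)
Your proposal is correct and matches the paper's (implicit) argument: the corollary is stated without proof precisely because it follows by iterating the Theorem layer by layer, bounding each $s_l$ by $s$, and using nonnegativity of the Dirichlet energy, exactly as you do. The remarks on $s<\infty$, finiteness of $E(X^{(0)})$, and the indexing convention are fine and do not change anything.
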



Our result shares great similarity with the paper \cite{oono2019graph}. The bounds are similar but our result handles more general cases. As noted in \cite{oono2019graph}, eigengap plays an important role. The analysis of Erdos-Renyi graph $G_{N, p}$ (or any other graphs that have large eigengaps) when $\frac{\log N}{N p}=o(1)$ in the paper \cite{oono2019graph}  can also be directly applied to our case.

\begin{figure*}[ht]
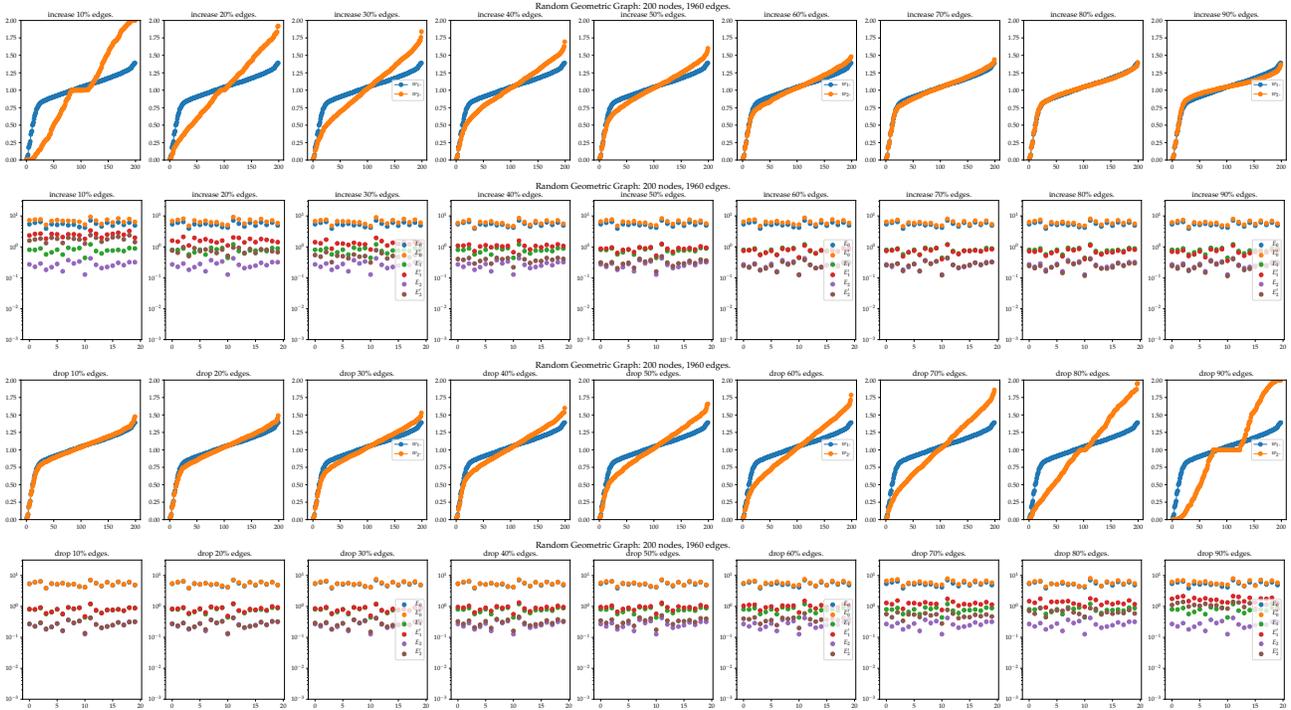

\begin{center}
\centerline{\includegraphics[width=\textwidth]{fig/increase/GEO_eig.pdf}} 
\centerline{\includegraphics[width=\textwidth]{fig/increase/GEO_energy_loweig.pdf}} 
\centerline{\includegraphics[width=\textwidth]{fig/drop/GEO_eig.pdf}} 
\centerline{\includegraphics[width=\textwidth]{fig/drop/GEO_energy_loweig.pdf}} 
\caption{The effects of dropping edges and increasing edge weights on eigenvalues / Dirichilet energy for Random Geometric graph.}
\end{center}
\vskip -0.2in
\end{figure*}

\section{Key Differences}
\label{sec4}
\label{diff}
The key quantity paper \cite{oono2019graph} looks at is the $d_\mathcal{M}(X)$ where $\mathcal{M}$ is a subspace of  $\mathbb{R}^{N \times C}$, defined as  $\mathcal{M}:=U \otimes \mathbb{R}^{C}=\left\{\sum_{m=1}^{M} e_{m} \otimes w_{m} | w_{m} \in \mathbb{R}^{C}\right\}$, where $(e_m)_{m\in[M]}$ is orthonormal basis of null space $U$ of a normalized graph Laplacian $\tilde{\Delta}$. 
%
The original definition of $d_\mathcal{M}(X)$ is defined for the case of the same embedding dimension across layers. It needs to be modified to handle the case of varying dimensions. 
One way to achieve this is to define $\mathcal{M}=U\otimes \mathbb{R}^C, \mathcal{M'}=U\otimes \mathbb{R}^{C'}$, respectively. The lemma 2 of paper \cite{oono2019graph} then can be modified from $d_{\mathcal{M}}(XW) \leq sd_{\mathcal{M}}(X)$ ($W\in \mathbb{R}^{c\times c}$) to the following:
\begin{equation}
d_{\mathcal{M'}}(XW) \leq sd_{\mathcal{M}}(X)
\end{equation}
where $s$ is the singular value of $W$ \footnote{Here with slight abuse of notation, $W\in \mathbb{R}^{c\times c'}$}. 

As for the nonlinearity, \cite{oono2019graph} mentions that their analysis is limited to graph neural networks with the ReLU activation function because they implicitly use the property that ReLU is a projection onto the cone $\{X  > 0\}$ (see Appendix A, Lemma 3 in \cite{oono2019graph} for details). This fact enables the ReLU function to get along with the nonnegativity of eigenvectors associated with the largest eigenvalues (Perron-Frobenius theorem). Therefore, the authors mentioned that it may not be easy to extend their results to other activation functions such as the sigmoid function or Leaky ReLU. 



In contrast, the proof of Lemma \ref{lem3} becomes trivial once we write out the Dirichlet energy as the sum of multiple terms for each of which the effect of nonlinearity can be easily analyzed. 


\section{Experiments}
To investigate how basic edges operations, removing edges, and increasing edge weight\footnote{In this paper, we only consider the case where the edge weight is increased to very high (from 1 to 10000 in all experiments). }, affect Dirichlet energy and over-smoothing, we perform extensive experiments on both common benchmarks (Cora and CiteSeer) and synthetic graphs. See appendix \ref{exp} for more details on datasets.

In particular, given a graph, we will compute its eigenvalues before and after randomly dropping/increasing weights of a certain percent ($10\% - 90\%$) of edges. This is shown in the first/third column for each figure. In the second/fourth column,  we generate three signals $x, Px(P'x), P^2x(P'^2x)$, where $x=\Sigma_i^T c_i v_i$ where $v_i$ is the first $T$ eigenvectors corresponding to lowest $T$ eigenvalues of normalized Laplacian and $c_i$ is uniform random number between 0 and 1. In other words, $x$ is a mix of lower eigenvectors. We then compute the Dirichlet energy of three signals both for original graph ($E_0, E_1, E_2$) and graph with edges removed/reweighted ($E'_0, E'_1, E'_2$). The same experiments are repeated 20 times and 120 data points are shown in the scatter plot.

We make the following observations. 
\begin{itemize}

\item First, for nearly all graphs and ratios (except for some cases of Cora and CiteSeer), dropping edges increases Dirichlet energy for $x, Px (P'x), P^2x (P'^2x)$. This coincides with the observation in DropEdge \cite{rong2019dropedge} that dropping edges help relive over-smoothing. 

\item Second, in most cases, the effect of increasing the weight of edge (from 1 to 10000) and dropping edges is `` dual" to each other, i.e., increasing weights of a few edges to a very high value is similar to dropping a lot of edges in terms of eigenvalue and Dirichlet energy. Intuitively, we can think of increasing the weight of an edge $u, v$ to infinity as contracting node $u$ and $v$ into a supernode. For the planar graph and its dual graph, edge deletion in one graph corresponds to the contraction in the other graph and vice versa. We hypothesize that randomly increasing the weight of a few edges to a high value will also help to relieve over-smoothing. We leave the systematic verification as future work. 

\end{itemize}

\section{Conclusion}
We provide an alternative proof of graph neural networks exponentially loosing expressive power. Being able to achieve the same bound as the paper \cite{oono2019graph}, our simple proof also handles Leaky ReLU. We also empirically explore the effect of basic edge operations on the Dirichlet energy. 

Some interesting future directions are: 1) The key challenge of analyzing the over-smoothing effect lies in the non-linearity. How to extend our strategy to more general graph learning such as other nonlinearities, normalization strategy \cite{zhao2019pairnorm}, graphs with both node and edge features and attention mechanism \cite{velivckovic2017graph} remains largely open.  2) The assumption on the norm of weight function of GNNs is crucial (may also be too strong) in our proof. Understanding how learning plays a role in resisting the over-smoothing effect is interesting. 3) Preserving Dirichlet energy for combinatorial Laplacian is well studied in the context of graph sparsification. Novel techniques in \cite{spielman2004nearly, spielman2011graph, spielman2011spectral, lee2017sdp} may be applicable.  Also, Dirichlet energy itself is easy to compute and can serve as a useful quantity to monitor during the training of graph networks for practitioners. Finally, analyzing the real over-smoothing effect, i.e., the Rayleigh quotient $\frac{tr(X^T\tilde{\Delta}X)}{||X||_2^2}$, for deep GNNs is still an open and important question. 

\newpage
\appendix

\section{Missing Proof}
\label{missing_proof}
To show that Lemma \ref{lem2} is not using any particular graph structure, we present an alternative proof of Lemma \ref{lem2}, show simply use the generic matrix inequality.
\begin{lemma}
\label{alter_lem2}
$E(XW) \leq ||W^T||_2^2 E(X)$
\end{lemma}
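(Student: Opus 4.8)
The plan is to discard the edgewise expansion used in the first proof of Lemma~\ref{lem2} and instead work directly with the trace form of the Dirichlet energy, using only that $\tilde{\Delta}$ is positive semidefinite. First I would write
\begin{equation*}
E(XW) = \mathrm{tr}\big((XW)^T \tilde{\Delta}\, (XW)\big) = \mathrm{tr}\big(W^T X^T \tilde{\Delta} X W\big).
\end{equation*}
Since $\tilde{\Delta}\succeq 0$ (its spectrum lies in $[0,2)$ by the lemma stated without proof), the spectral theorem gives a symmetric square root, so with $\tilde{\Delta} = B^T B$ and $M := BX$ we obtain $E(XW) = \mathrm{tr}\big(W^T M^T M W\big) = \|MW\|_F^2$ and likewise $E(X) = \|M\|_F^2$. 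Thus the claim reduces to the purely linear-algebraic inequality $\|MW\|_F \le \|M\|_F\,\|W\|_2$.

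For that inequality I would give the short SVD argument: writing $W = U\Sigma V^T$, right multiplication by the orthogonal $V^T$ preserves the Frobenius norm, so $\|MW\|_F = \|MU\Sigma\|_F$; since $\Sigma$ is diagonal with entries at most $\sigma_{\max}(W) = \|W\|_2$, rescaling the columns of $MU$ by $\Sigma$ enlarges the Frobenius norm by a factor of at most $\|W\|_2$, and $\|MU\|_F = \|M\|_F$ because $U$ is orthogonal. Combining, $E(XW) = \|MW\|_F^2 \le \|W\|_2^2\,\|M\|_F^2 = \|W\|_2^2\, E(X)$, and $\|W\|_2 = \|W^T\|_2$ gives exactly the stated bound. (If one wishes to avoid the SVD entirely, the same estimate follows row-by-row: $\|MW\|_F^2 = \sum_i \|m_i^T W\|_2^2 \le \|W^T\|_2^2 \sum_i \|m_i\|_2^2 = \|W^T\|_2^2\,\|M\|_F^2$, where $m_i^T$ are the rows of $M$.)

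There is essentially no obstacle here; the mild point to be careful about is only that the factorization $\tilde{\Delta} = B^T B$ is legitimate, which it is since $\tilde{\Delta}$ is symmetric positive semidefinite. The reason for including this alternative proof is precisely what it makes transparent: nothing about the graph is used beyond $\tilde{\Delta}\succeq 0$, so $W$ acts only on the feature space and is indeed oblivious to the graph structure, and the argument goes through verbatim when $W\in\mathbb{R}^{c\times c'}$ with $c\neq c'$.
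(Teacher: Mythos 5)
Your proof is correct, and it reaches the bound by a slightly different mechanism than the paper's appendix proof. Both arguments start from the same trace form $E(XW)=\mathrm{tr}(W^TX^T\tilde{\Delta}XW)$ and both use nothing about the graph beyond $\tilde{\Delta}\succeq 0$, but the paper then cycles the trace to get $\mathrm{tr}\big(X^T\tilde{\Delta}X\,WW^T\big)$ and invokes, as a black box, the trace inequality $\mathrm{tr}(AB)\le \sigma_{\max}(B)\,\mathrm{tr}(A)$ for the two positive semidefinite factors $A=X^T\tilde{\Delta}X$ and $B=WW^T$, whereas you factor $\tilde{\Delta}=B^TB$ and rewrite both energies as Frobenius norms, $E(X)=\|\tilde{\Delta}^{1/2}X\|_F^2$, reducing the lemma to the elementary submultiplicativity $\|MW\|_F\le\|M\|_F\,\|W\|_2$, which you prove outright (by SVD, or row-by-row). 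Your route is more self-contained, since the psd trace inequality used in the paper is stated without proof and its validity genuinely requires both factors to be psd, a hypothesis the paper leaves implicit; your square-root reformulation also makes completely transparent why only $\tilde{\Delta}\succeq 0$ matters and why a rectangular $W\in\mathbb{R}^{c\times c'}$ causes no difficulty, and in fact your row-by-row variant mirrors the edgewise proof of Lemma~\ref{lem2} at the level of $M=\tilde{\Delta}^{1/2}X$. The paper's version is shorter (two lines) at the cost of relying on that unproved trace inequality; either argument is a valid alternative proof serving the stated purpose of the appendix.
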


\begin{proof}
Expand $E(XW)$ in matrix form, 
\begin{equation*}
\begin{split}
E(XW) &= tr(W^TX^T\tilde{\Delta} XW) \\
& = tr(X^T\tilde{\Delta} XWW^T)\\
&\leq tr(X^T\tilde{\Delta} X)\sigma_{max}(WW^T) \\
& = E(X) ||W^T||_2^2\\
\end{split}
\end{equation*}
Note $\sigma_{max}$ denotes the largest eigenvalue and $\|A\|_{2}=\sqrt{\lambda_{\max }\left(A^{*} A\right)}=\sigma_{\max }(A)$.
\end{proof}

\begin{theorem}
For any $l \in \mathbb{N}_{+}$, we have $E(\mathbf{f}_l(X)) \leq s_l \bar{\lambda} E(X)$
\end{theorem}

\begin{proof}
By Lemma \ref{lem1}-\ref{lem3}, 
\begin{equation*}
\begin{split}
E(\mathbf{f}_l(X)) &= E( \underbrace{\sigma(\cdots \sigma(\sigma(}_{H \text { times }}P X) W_{l 1}) W_{l 2} \cdots W_{l H_{l}})) \\
& \leq E(\underbrace{\sigma(\cdots \sigma(\sigma(}_{H-1 \text { times }}P X) W_{l 1}) W_{l 2} \cdots) W_{l H_{l}}) \\
& \leq s_{l H_{l}-1} E(\underbrace{\sigma(\cdots \sigma(\sigma(}_{H -1 \text { times }}P X) W_{l 1}) W_{l 2} \cdots) W_{l H_{l}-1}))\\
&...\\
&\leq \left(\prod_{h=1}^{H_{l}} s_{l h}\right) E(P X)\\
&\leq s_l \bar{\lambda} E(PX) \\
&\leq s_l \bar{\lambda} E(X)
\end{split}
\end{equation*}
\end{proof}

\section{Experiments}
\label{exp}
We perform experiments on both synthetic graphs and real graphs benchmarks. The threshold $T$ for the number of lower eigenvectors is set to be 20 for synthetic graphs. For Cora and Citeseer, it is set to be 400 and 600 respectively (due to a large number of nearly zero eigenvalues). The code is available on Github. \footnote{https://github.com/Chen-Cai-OSU/GNN-Over-Smoothing} The details of each graph are listed as follows:
\begin{itemize}
    \item Random graph $G(200, 0.05).$
    
    \item Random geometric graph on the plane. There are 200 nodes uniformly at random in the unit cube. Two nodes are joined by an edge if the distance between the nodes is at most 0.2. 
    
    \item Stochastic Block Model with 2 blocks. It consists of two blocks where each block has 100 nodes. The edge probability within the block is $0.1$ and edge probability between blocks is $0.01$.
    
    \item Stochastic Block Model with 4 blocks. It consists of four blocks where each block has 50 nodes. The edge probability within the block is $0.1, 0.2, 0.3, 0.4$. The edge probability between blocks is $0.08$.
    
    \item Barabasi-Albert Graph. A graph of $n$ nodes is grown by attaching new nodes each with $m$ edges that are preferentially attached to existing nodes with high degree. We set $n, m$ to be $200$ and $4$.
    
    \item Cora is a citation graph where 2708 nodes are documents and 5278 edges are citation links.
    
    \item Citeseer is a citation graph where 3327 nodes are documents and 4552 edges are citation links.

\end{itemize}

\begin{figure*}[ht] 
\vskip 0.2in 
\begin{center} 
 Low Eigenvector Mix + Reweight Edge  
\centerline{\includegraphics[width=\textwidth]{fig/increase/cora_eig.pdf}} 
\centerline{\includegraphics[width=\textwidth]{fig/increase/cora_energy_loweig.pdf}} 
 Low Eigenvector Mix + Drop Edge  
\ 
\centerline{\includegraphics[width=\textwidth]{fig/drop/cora_eig.pdf}} 
\centerline{\includegraphics[width=\textwidth]{fig/drop/cora_energy_loweig.pdf}} 
\caption{Cora.} 
\end{center} 
\vskip -0.2in 
\end{figure*} 

\begin{figure*}[ht]
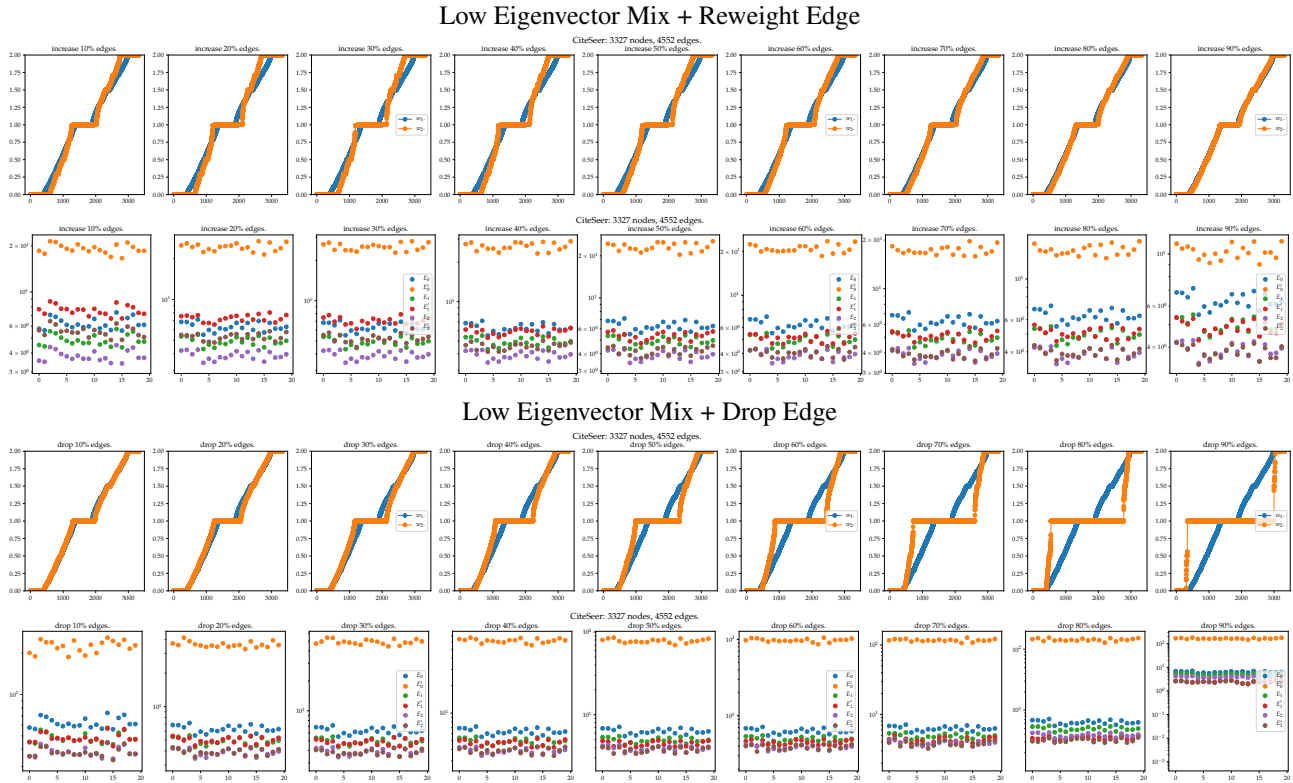
 
\vskip 0.2in 
\begin{center} 
 Low Eigenvector Mix + Reweight Edge  
\centerline{\includegraphics[width=\textwidth]{fig/increase/citeseer_eig.pdf}} 
\centerline{\includegraphics[width=\textwidth]{fig/increase/citeseer_energy_loweig.pdf}} 
 Low Eigenvector Mix + Drop Edge  
\ 
\centerline{\includegraphics[width=\textwidth]{fig/drop/citeseer_eig.pdf}} 
\centerline{\includegraphics[width=\textwidth]{fig/drop/citeseer_energy_loweig.pdf}} 
\caption{Citeseer.} 
\end{center} 
\vskip -0.2in 
\end{figure*} 

\begin{figure*}[ht]
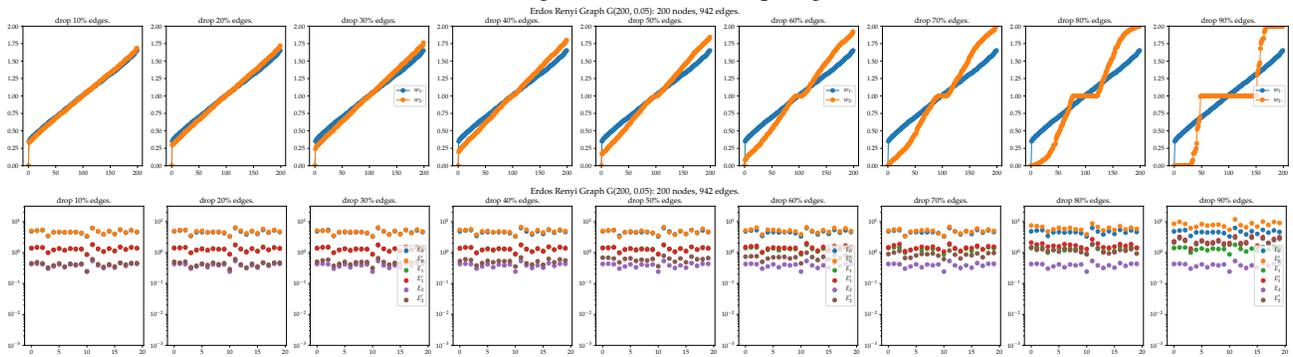
 
\vskip 0.2in 
\begin{center} 
 Low Eigenvector Mix + Reweight Edge  
\centerline{\includegraphics[width=\textwidth]{fig/increase/ER_eig.pdf}} 
\centerline{\includegraphics[width=\textwidth]{fig/increase/ER_energy_loweig.pdf}} 
 Low Eigenvector Mix + Drop Edge  
\ 
\centerline{\includegraphics[width=\textwidth]{fig/drop/ER_eig.pdf}} 
\centerline{\includegraphics[width=\textwidth]{fig/drop/ER_energy_loweig.pdf}} 
\caption{Random Graph.} 
\end{center} 
\vskip -0.2in 
\end{figure*} 

\begin{figure*}[ht]
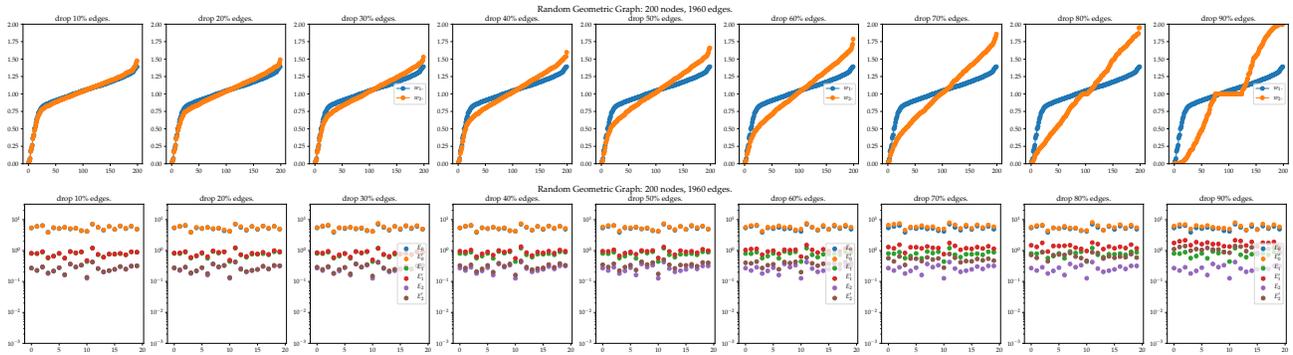
 
\vskip 0.2in 
\begin{center} 
 Low Eigenvector Mix + Reweight Edge  
\centerline{\includegraphics[width=\textwidth]{fig/increase/GEO_eig.pdf}} 
\centerline{\includegraphics[width=\textwidth]{fig/increase/GEO_energy_loweig.pdf}} 
 Low Eigenvector Mix + Drop Edge  
\ 
\centerline{\includegraphics[width=\textwidth]{fig/drop/GEO_eig.pdf}} 
\centerline{\includegraphics[width=\textwidth]{fig/drop/GEO_energy_loweig.pdf}} 
\caption{Random Geometric Graph.} 
\end{center} 
\vskip -0.2in 
\end{figure*} 

\begin{figure*}[ht]
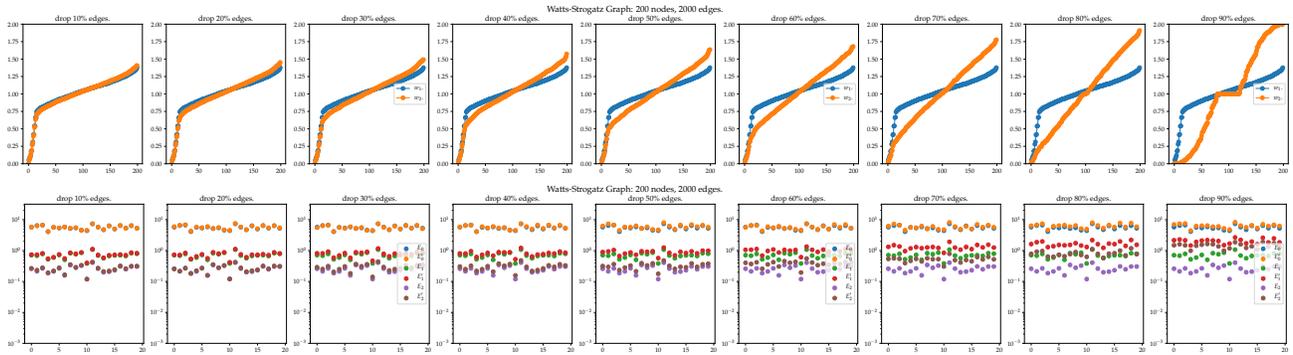
 
\vskip 0.2in 
\begin{center} 
 Low Eigenvector Mix + Reweight Edge  
\centerline{\includegraphics[width=\textwidth]{fig/increase/WS_eig.pdf}} 
\centerline{\includegraphics[width=\textwidth]{fig/increase/WS_energy_loweig.pdf}} 
 Low Eigenvector Mix + Drop Edge  
\ 
\centerline{\includegraphics[width=\textwidth]{fig/drop/WS_eig.pdf}} 
\centerline{\includegraphics[width=\textwidth]{fig/drop/WS_energy_loweig.pdf}} 
\caption{Watts-Strogatz Graph.} 
\end{center} 
\vskip -0.2in 
\end{figure*} 

\begin{figure*}[ht] 
\vskip 0.2in 
\begin{center} 
 Low Eigenvector Mix + Reweight Edge  
\centerline{\includegraphics[width=\textwidth]{fig/increase/SBM2_eig.pdf}} 
\centerline{\includegraphics[width=\textwidth]{fig/increase/SBM2_energy_loweig.pdf}} 
 Low Eigenvector Mix + Drop Edge  
\ 
\centerline{\includegraphics[width=\textwidth]{fig/drop/SBM2_eig.pdf}} 
\centerline{\includegraphics[width=\textwidth]{fig/drop/SBM2_energy_loweig.pdf}} 
\caption{Stochastic Block Model with 2 Blocks. } 
\end{center} 
\vskip -0.2in 
\end{figure*} 

\begin{figure*}[ht] 
\vskip 0.2in 
\begin{center} 
 Low Eigenvector Mix + Reweight Edge  
\centerline{\includegraphics[width=\textwidth]{fig/increase/SBM4_eig.pdf}} 
\centerline{\includegraphics[width=\textwidth]{fig/increase/SBM4_energy_loweig.pdf}} 
 Low Eigenvector Mix + Drop Edge  
\ 
\centerline{\includegraphics[width=\textwidth]{fig/drop/SBM4_eig.pdf}} 
\centerline{\includegraphics[width=\textwidth]{fig/drop/SBM4_energy_loweig.pdf}} 
\caption{Stochastic Block Model with 4 Blocks. } 
\end{center} 
\vskip -0.2in 
\end{figure*} 

\begin{figure*}[ht]
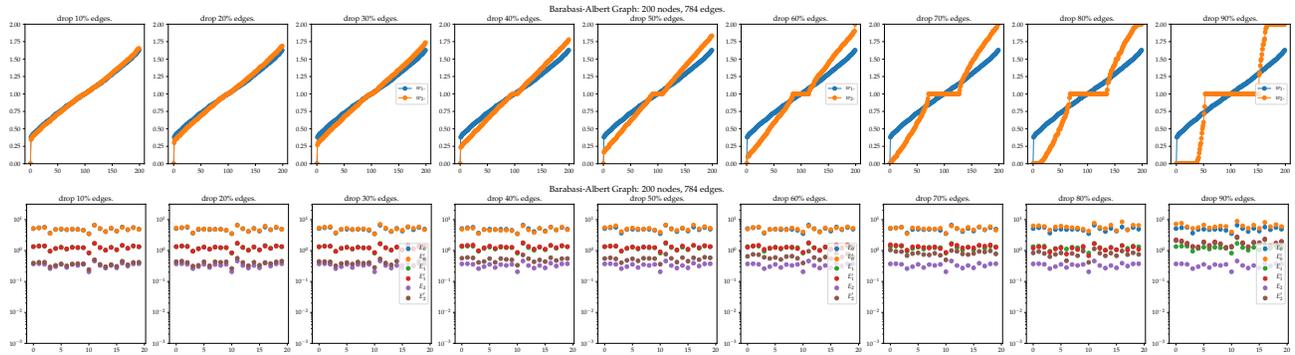
 
\vskip 0.2in 
\begin{center} 
 Low Eigenvector Mix + Reweight Edge  
\centerline{\includegraphics[width=\textwidth]{fig/increase/BA_eig.pdf}} 
\centerline{\includegraphics[width=\textwidth]{fig/increase/BA_energy_loweig.pdf}} 
 Low Eigenvector Mix + Drop Edge  
\ 
\centerline{\includegraphics[width=\textwidth]{fig/drop/BA_eig.pdf}} 
\centerline{\includegraphics[width=\textwidth]{fig/drop/BA_energy_loweig.pdf}} 
\caption{Barabasi-Albert Graph. } 
\end{center} 
\vskip -0.2in 
\end{figure*}

\newpage
\bibliography{ref.bib}

\begin{thebibliography}{19}
\providecommand{\natexlab}[1]{#1}
\providecommand{\url}[1]{\texttt{#1}}
\expandafter\ifx\csname urlstyle\endcsname\relax
  \providecommand{\doi}[1]{doi: #1}\else
  \providecommand{\doi}{doi: \begingroup \urlstyle{rm}\Url}\fi

\bibitem[Gurukar et~al.(2019)Gurukar, Vijayan, Srinivasan, Bajaj, Cai,
  Keymanesh, Kumar, Maneriker, Mitra, Patel, et~al.]{gurukar2019network}
Gurukar, S., Vijayan, P., Srinivasan, A., Bajaj, G., Cai, C., Keymanesh, M.,
  Kumar, S., Maneriker, P., Mitra, A., Patel, V., et~al.
\newblock Network representation learning: Consolidation and renewed bearing.
\newblock \emph{arXiv preprint arXiv:1905.00987}, 2019.

\bibitem[Hamilton et~al.(2017)Hamilton, Ying, and
  Leskovec]{hamilton2017inductive}
Hamilton, W., Ying, Z., and Leskovec, J.
\newblock Inductive representation learning on large graphs.
\newblock In \emph{Advances in neural information processing systems}, pp.\
  1024--1034, 2017.

\bibitem[Kipf \& Welling(2016)Kipf and Welling]{kipf2016semi}
Kipf, T.~N. and Welling, M.
\newblock Semi-supervised classification with graph convolutional networks.
\newblock \emph{arXiv preprint arXiv:1609.02907}, 2016.

\bibitem[Lee \& Sun(2017)Lee and Sun]{lee2017sdp}
Lee, Y.~T. and Sun, H.
\newblock An sdp-based algorithm for linear-sized spectral sparsification.
\newblock In \emph{Proceedings of the 49th annual acm sigact symposium on
  theory of computing}, pp.\  678--687, 2017.

\bibitem[Li et~al.(2019)Li, M{\"u}ller, Thabet, and Ghanem]{li2019deepgcns}
Li, G., M{\"u}ller, M., Thabet, A., and Ghanem, B.
\newblock Deepgcns: Can gcns go as deep as cnns?
\newblock \emph{arXiv preprint arXiv:1904.03751}, 2019.

\bibitem[Li et~al.(2018)Li, Han, and Wu]{li2018deeper}
Li, Q., Han, Z., and Wu, X.-M.
\newblock Deeper insights into graph convolutional networks for semi-supervised
  learning.
\newblock In \emph{Thirty-Second AAAI Conference on Artificial Intelligence},
  2018.

\bibitem[Luan et~al.(2019)Luan, Zhao, Chang, and Precup]{luan2019break}
Luan, S., Zhao, M., Chang, X.-W., and Precup, D.
\newblock Break the ceiling: Stronger multi-scale deep graph convolutional
  networks.
\newblock In \emph{Advances in Neural Information Processing Systems}, pp.\
  10943--10953, 2019.

\bibitem[Oono \& Suzuki(2019)Oono and Suzuki]{oono2019graph}
Oono, K. and Suzuki, T.
\newblock Graph neural networks exponentially lose expressive power for node
  classification.
\newblock \emph{arXiv preprint cs.LG/1905.10947}, 2019.

\bibitem[Qu et~al.(2019)Qu, Bengio, and Tang]{qu2019gmnn}
Qu, M., Bengio, Y., and Tang, J.
\newblock Gmnn: Graph markov neural networks.
\newblock \emph{arXiv preprint arXiv:1905.06214}, 2019.

\bibitem[Rong et~al.(2019)Rong, Huang, Xu, and Huang]{rong2019dropedge}
Rong, Y., Huang, W., Xu, T., and Huang, J.
\newblock Dropedge: Towards deep graph convolutional networks on node
  classification.
\newblock In \emph{International Conference on Learning Representations}, 2019.

\bibitem[Spielman \& Srivastava(2011)Spielman and
  Srivastava]{spielman2011graph}
Spielman, D.~A. and Srivastava, N.
\newblock Graph sparsification by effective resistances.
\newblock \emph{SIAM Journal on Computing}, 40\penalty0 (6):\penalty0
  1913--1926, 2011.

\bibitem[Spielman \& Teng(2004)Spielman and Teng]{spielman2004nearly}
Spielman, D.~A. and Teng, S.-H.
\newblock Nearly-linear time algorithms for graph partitioning, graph
  sparsification, and solving linear systems.
\newblock In \emph{Proceedings of the thirty-sixth annual ACM symposium on
  Theory of computing}, pp.\  81--90, 2004.

\bibitem[Spielman \& Teng(2011)Spielman and Teng]{spielman2011spectral}
Spielman, D.~A. and Teng, S.-H.
\newblock Spectral sparsification of graphs.
\newblock \emph{SIAM Journal on Computing}, 40\penalty0 (4):\penalty0
  981--1025, 2011.

\bibitem[Veli{\v{c}}kovi{\'c} et~al.(2017)Veli{\v{c}}kovi{\'c}, Cucurull,
  Casanova, Romero, Lio, and Bengio]{velivckovic2017graph}
Veli{\v{c}}kovi{\'c}, P., Cucurull, G., Casanova, A., Romero, A., Lio, P., and
  Bengio, Y.
\newblock Graph attention networks.
\newblock \emph{arXiv preprint arXiv:1710.10903}, 2017.

\bibitem[Wu et~al.(2019)Wu, Zhang, Souza~Jr, Fifty, Yu, and
  Weinberger]{wu2019simplifying}
Wu, F., Zhang, T., Souza~Jr, A. H.~d., Fifty, C., Yu, T., and Weinberger, K.~Q.
\newblock Simplifying graph convolutional networks.
\newblock \emph{arXiv preprint arXiv:1902.07153}, 2019.

\bibitem[Wu et~al.(2020)Wu, Pan, Chen, Long, Zhang, and
  Philip]{wu2020comprehensive}
Wu, Z., Pan, S., Chen, F., Long, G., Zhang, C., and Philip, S.~Y.
\newblock A comprehensive survey on graph neural networks.
\newblock \emph{IEEE Transactions on Neural Networks and Learning Systems},
  2020.

\bibitem[Xie \& Grossman(2018)Xie and Grossman]{xie2018crystal}
Xie, T. and Grossman, J.~C.
\newblock Crystal graph convolutional neural networks for an accurate and
  interpretable prediction of material properties.
\newblock \emph{Physical review letters}, 120\penalty0 (14):\penalty0 145301,
  2018.

\bibitem[Zhao \& Akoglu(2019)Zhao and Akoglu]{zhao2019pairnorm}
Zhao, L. and Akoglu, L.
\newblock Pairnorm: Tackling oversmoothing in gnns.
\newblock \emph{arXiv preprint arXiv:1909.12223}, 2019.

\bibitem[Zhou et~al.(2018)Zhou, Cui, Zhang, Yang, Liu, Wang, Li, and
  Sun]{zhou2018graph}
Zhou, J., Cui, G., Zhang, Z., Yang, C., Liu, Z., Wang, L., Li, C., and Sun, M.
\newblock Graph neural networks: A review of methods and applications.
\newblock \emph{arXiv preprint arXiv:1812.08434}, 2018.

\end{thebibliography}
\bibliographystyle{icml2020}
\end{document}
